\documentclass{article}

\usepackage[nonatbib, preprint, logo]{cig}

\usepackage[utf8]{inputenc} 
\usepackage[T1]{fontenc}    
\usepackage{url}
\usepackage{graphicx}
\usepackage{amsthm}
\usepackage{amsmath}
\usepackage{amssymb}
\usepackage{bm}
\usepackage{algorithm}
\usepackage{algorithmic}
\usepackage{booktabs}
\usepackage{xcolor}
\usepackage{xspace}
\usepackage{soul}
\usepackage{orcidlink}
\usepackage{inconsolata}   
\usepackage{hyperref}       
\usepackage{arydshln}
\usepackage[export]{adjustbox}
\usepackage{tikz}
\usepackage{cite}

\definecolor{darkblue}{rgb}{0, 0, 0.85}
\definecolor{lightgreen}{rgb}{.85,1,.85}
\definecolor{lightred}{rgb}{1,.85,.85}
\definecolor{lightblue}{rgb}{.85,.85,1}
\definecolor{pink}{HTML}{EB346F}
\hypersetup{
    colorlinks = true,
    citecolor = darkblue,
    linkcolor = {black}
}

\def\d{{\mathrm{\, d}}}

\def\min{\mathop{\mathsf{min}}}

\def\E{\mathbb{E}}

\def\R{\mathbb{R}}

\def\zerobm{{\bm{0}}}

\def\nbm{{\bm{n}}}

\def\ubm{{\bm{u}}}
\def\vbm{{\bm{v}}}
\def\wbm{{\bm{w}}}
\def\xbm{{\bm{x}}}
\def\ybm{{\bm{y}}}
\def\zbm{{\bm{z}}}

\def\mubm{{\bm{\mu}}}

\def\psibm{{\bm{\psi}}}

\def\phibm{{\bm{\phi}}}

\def\thetabm{{\bm{\theta }}}

\def\Abm{{\bm{A}}}

\def\Ibm{{\bm{I}}}

\def\Pbm{{\bm{P}}}

\def\Lcal{{\mathcal{L}}}
\def\Mcal{{\mathcal{M}}}
\def\Ncal{{\mathcal{N}}}

\def\Ucal{{\mathcal{U}}}

\newcommand{\hlgreen}[1]{{\sethlcolor{lightgreen}\hl{#1}}}
\newcommand{\hlblue}[1]{{\sethlcolor{lightblue}\hl{#1}}}

\newcommand{\vecfield}{\vbm}
\newcommand{\meanvecfield}{\ubm}
\newcommand{\startDist}{q}
\newcommand{\imageDist}{p}
\newcommand\flow{{\phibm}}
\newcommand{\conditflow}{{\psibm}}
\newcommand{\projnull}{{\Pbm}}

\newcommand{\methodabbrev}{NullFlow\xspace}

\newtheorem{proposition}{Proposition} 
\newtheorem{lemma}{Lemma} 

\makeatletter 
\newtheorem*{rep@proposition}{\rep@title}
    \newcommand{\newrepproposition}[2]{%
    \newenvironment{rep#1}[1]{%
     \def\rep@title{#2 \ref{##1}}%
     \begin{rep@proposition}}%
     {\end{rep@proposition}}}
\makeatother 

\newrepproposition{proposition}{Proposition}

\definecolor{codefunccall}{RGB}{197, 5, 12} 
\definecolor{codemath}{RGB}{30, 80, 200}
\definecolor{codecommentteal}{RGB}{69, 123, 123}

\title{\methodabbrev: One-Step Generative Reconstruction}

\author{%
\normalsize Xiao Shi$^{\dagger,1}$\,\orcidlink{0009-0007-5172-2408} \quad
Edward P. Chandler$^{\dagger, 1}$\,\orcidlink{0009-0006-2650-1083} \quad
Chicago Y. Park$^{1}$\,\orcidlink{0000-0002-5868-6557}\\[0.3em]
\normalsize Shirin Shoushtari$^{2}$\,\orcidlink{0000-0003-0654-6760} \quad
Ulugbek S. Kamilov$^{1}$\,\orcidlink{0000-0001-6770-3278}\\[0.7em]
\small \textnormal{$^1$University of Wisconsin--Madison, \quad $^2$Washington University in St.\ Louis}\\[0.5em]
\footnotesize \texttt{\{xiao.shi, epchandler, chicago.park, kamilov\}@wisc.edu} \quad
\footnotesize \texttt{s.shirin@wustl.edu}\\[0.5em]
\footnotesize $^\dagger$Equal contribution.
}

\begin{document}

\maketitle

\begin{abstract}
We propose \methodabbrev, a principled framework for \emph{one-step} generative image reconstruction. 
Our key idea is to confine the generative flow to a measurement-consistent subspace.
Because the flow never leaves this subspace, \methodabbrev needs no separate data-fidelity corrections, unlike existing solvers. 
\methodabbrev samples in a single network evaluation by learning the flow's \emph{average} velocity, avoiding the step-by-step integration of the traditional flow matching methods. We prove that the average velocity of this constrained flow yields a training objective whose global minimizer is a one-step posterior sampler. We show on image inpainting that \methodabbrev matches state-of-the-art diffusion solvers while cutting
inference from hundreds of network evaluations to one.
\end{abstract}

\noindent\textbf{Index Terms---} Imaging inverse problems, image reconstruction, generative priors, null-space learning, mean flows.

\section{Introduction}

Imaging inverse problems seek to recover an unknown image
$\xbm \in \R^n$ from measurements $\ybm \in \R^m$.
For linear systems this is
modeled as
\begin{equation}
\label{eq:inverse_problem}
\ybm = \Abm \xbm + \nbm,
\end{equation}
where $\Abm \in \R^{m \times n}$ is a known forward operator and
$\nbm$ is noise. We focus on the ill-posed setting where
$\mathrm{null}(\Abm) \neq \{\zerobm\}$, which covers image inpainting,
accelerated MRI, and sparse-view CT. In this
setting, the measurements constrain only the row-space of
$\xbm$; the null-space component is invisible to $\Abm$ and must be supplied
\emph{entirely} by prior knowledge. Reconstruction is therefore not inversion
of $\Abm$, but generation of plausible null-space content consistent with the
measurements.

There is growing interest in learning-based methods for supplying this missing content. Early work predicted $\xbm$ directly from $\ybm$~\cite{dong.etal2016, zhang.etal2017, jin.etal2017, schwab.etal2019, chen.davies2020}. Recent
methods instead model the distribution of plausible images, using denoisers~\cite{Xu.etal2020, zhang.etal2021, renaud2024snore},
diffusion~\cite{ho2020denoising, rombach2022high} and
flow~\cite{lipman2022flow, liu2022flow, albergo2023building, albergo2025stochastic} models as priors that, plugged into
inverse solvers, reach state-of-the-art reconstruction across many
problems~\cite{kawar.etal2022, chung.etal2023, liu2023dolce, albergostochastic} (see~\cite{kamilov.etal2023, daras.etal2024} for surveys).

Existing generative solvers are expensive due to their iterative nature. They reconstruct images by simulating a full generative trajectory, alternating prior updates from the model with data-fidelity updates that enforce measurement consistency, and reach a solution only after hundreds or thousands of network evaluations~\cite{wang.etal2022, kim.etal2025, pourya.etal2025, martin.etal2025, chen2024flow, ben2024d, zhang2024flow, pokle2023training}. Mean Flows~\cite{geng.etal2026, geng.etal2025, pixelmeanflows} are a recent class of methods
that remove this cost for unconditional and class-conditional generation. Rather than learning
the instantaneous velocity of a flow and integrating it step by step, a
Mean Flow learns the \emph{average} velocity between any two time points
directly, collapsing the entire trajectory into a single network
evaluation. To date, however, this idea has not been applied to imaging inverse problems.

In this letter, we propose \methodabbrev, which brings the Mean Flow into the null space of the forward operator, thus enabling a one-step posterior sampler. We make three contributions:
\begin{itemize}
\item \emph{Method.} We propose the first one-step solver for imaging inverse
problems based on mean flows. Confining the flow to $\mathrm{null}(\Abm)$
keeps every state measurement-consistent by construction, and learning its
average velocity yields reconstruction in a single network evaluation.
\item \emph{Theory.} We prove that the average
velocity of this constrained flow satisfies a MeanFlow identity restricted
to the null space, yielding a training objective whose global minimizer is
a one-step posterior sampler $\xbm \sim p(\xbm | \ybm)$.
\item \emph{Experiments.} We show on image inpainting that one \methodabbrev sample gives the best LPIPS in a single step; averaging 100 samples approaches the MMSE estimate and exceeds the MSE-trained network in PSNR and SSIM, without retraining.
\end{itemize}

\begin{figure}[t]
    \centering
    \includegraphics[width=3.3in]{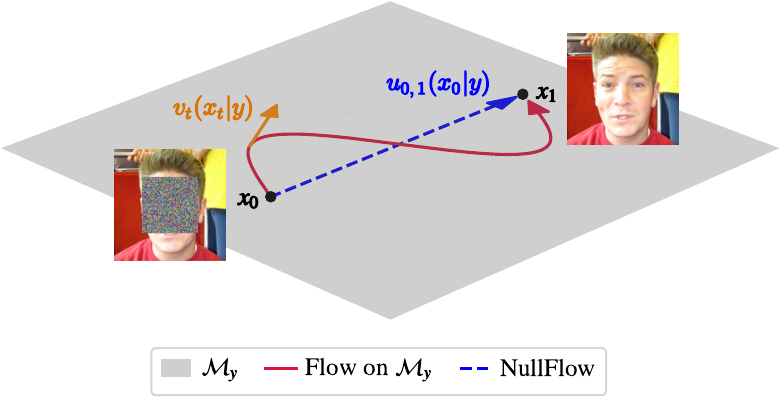}
    \caption{
        \methodabbrev is able to reconstruct in a single step without leaving the measurement-consistent subspace $\Mcal_\ybm := \{ \xbm : \Abm \xbm = \ybm \}$ by learning the average velocity \textcolor{blue}{$\meanvecfield_{0,1}(\xbm_0 | \ybm)$}
        rather than the instantaneous velocity \textcolor{orange}{$\vecfield_t(\xbm_t | \ybm)$} 
        of a \textcolor{red}{flow} confined to $\Mcal_\ybm$. 
	}
    \label{fig:flow illustration}
\end{figure}

\section{Proposed Method}

Our method is motivated by the observation that any $\xbm$ can be decomposed into the row space projection and null space projection of an ill-posed $\Abm$:
\[
    \xbm = \underbrace{\Abm^{\dagger} \Abm \xbm \vphantom{(\Ibm - \Abm^\dagger \Abm)}}_{\text{row space projection}} 
         + \underbrace{ (\Ibm - \Abm^\dagger \Abm) \xbm}_{\text{null space projection}},
\]
where $\Abm^{\dagger}$ is the pseudo-inverse of $\Abm$. We write
$\projnull := \Ibm - \Abm^\dagger \Abm$ for the orthogonal projection onto
$\mathrm{null}(\Abm)$, which we use throughout. In the noiseless measurement setting, the inverse problem can be viewed as reconstructing the null-space components, conditioned on the measurement $\ybm$;  the set of all images that are measurement consistent is the affine subspace
\begin{equation}
    \label{Eq:MeasurementSubspace}
    \Mcal_\ybm := \{ \xbm : \Abm \xbm = \ybm \}.
\end{equation}
Our objective is to learn a one-step Mean Flow, which we call \methodabbrev, 
that exists completely within $\Mcal_{\ybm}$
and samples from the conditional image distribution $\imageDist(\xbm | \ybm)$ (see
Fig.~\ref{fig:flow illustration}).

With measurement noise, $\ybm$ no longer determines the row-space component
exactly. One may instead anchor it at the least-squares estimate
$\Abm^\dagger\ybm$ and reconstruct only the null-space component 
by defining the flow within $\{ \xbm' : \Abm \xbm' = \Abm \Abm^\dagger \ybm \}$.
This is equivalent to drawing a
posterior sample from the target image distribution $\wbm \sim \imageDist(\cdot|\ybm)$ and forming
$\xbm' = \Abm^\dagger\ybm + \projnull\wbm$. 
For the remainder of the paper,
we assume noiseless measurements, 
and use $\Mcal_\ybm$ given in~\eqref{Eq:MeasurementSubspace}.

\begin{algorithm}[t]
\caption{\methodabbrev: Training}
\label{algo:training}
\footnotesize 
\begin{algorithmic}
    \REQUIRE clean data \texttt{x}, measurement \texttt{y}, forward model \texttt{\textcolor{codefunccall}{A}} 
\vspace{0.05cm} 
    \STATE \texttt{t,r = \textcolor{codefunccall}{sample\_t\_r}()}
    \STATE \COMMENT{Add noise only to manifold}
    \STATE \texttt{e = \textcolor{codefunccall}{randn}(x.shape)}
    \STATE \texttt{z = t\,\textcolor{codemath}{*}\,\textcolor{codefunccall}{proj\_null}(e)\,\textcolor{codemath}{+}\,(1\textcolor{codemath}{-}t)\,\textcolor{codemath}{*}\,\textcolor{codefunccall}{proj\_null}(x)\,\textcolor{codemath}{+}\,\textcolor{codefunccall}{A\_pinv}(y)} 
    \STATE
    \STATE \COMMENT{average velocity u from x-prediction}
    \STATE \texttt{\textcolor{codefunccall}{def} u\_fn(z, r, t)}:
    \STATE \hspace{2em} \texttt{\textcolor{codefunccall}{return} (z \textcolor{codemath}{-} net(z, r, t)) \textcolor{codemath}{/} t}
    \STATE
    \STATE \COMMENT{instantaneous velocity v at time t}
    \STATE \texttt{v = u\_fn(z, t, t})
    \STATE
    \STATE \COMMENT{predict u and dudt}
    \STATE \texttt{u, dudt = \textcolor{codefunccall}{jvp}(u\_fn, (z, r, t), (v, 0, 1))}
    \STATE \texttt{V = u \textcolor{codemath}{+} (t \textcolor{codemath}{-} r) \textcolor{codemath}{*} \textcolor{codefunccall}{stopgrad}(dudt)}
    \STATE
    \STATE \texttt{loss = \textcolor{codefunccall}{metric}(V, e\_M \textcolor{codemath}{-} x)}
\end{algorithmic}
\end{algorithm}

\begin{algorithm}[t]
\caption{\methodabbrev: One-Step Sampling}
\label{algo:sampling}
\footnotesize 
\begin{algorithmic}
\vspace{0.05cm} 
\STATE \texttt{e\_M = \textcolor{codefunccall}{A\_pinv}(y) \textcolor{codemath}{+} \textcolor{codefunccall}{proj\_null}(\textcolor{codefunccall}{randn}(x\_shape))}
\STATE \texttt{x = net(e\_M, r=0, t=1)}
\end{algorithmic}
\end{algorithm}

\subsection{Subspace-Restricted Flow Matching}
\label{sec:flow matching}

We adapt the flow matching of~\cite{lipman2022flow} to the subspace
$\Mcal_\ybm$, confining every object---path, flow, and velocity---to
$\Mcal_\ybm$, whose tangent space is $\mathrm{null}(\Abm)$. This keeps every
state measurement-consistent without a separate data-fidelity step.

We transport between two distributions supported on $\Mcal_\ybm$. The
source is $\xbm_0 \sim \startDist(\cdot | \ybm)$, with
$\xbm_0 = \Abm^\dagger \ybm + \projnull\zbm$ and
$\zbm \sim \Ncal(\zerobm, \Ibm)$, which pins the row-space component to the
least-squares estimate $\Abm^\dagger \ybm$ and fills the null space with
Gaussian noise. The target is $\xbm_1 \sim \imageDist(\cdot | \ybm)$, the
conditional image distribution. We seek a probability path
$\imageDist_t(\cdot | \ybm)$ on $\Mcal_\ybm$, with
$\imageDist_0(\cdot|\ybm) = \startDist(\cdot|\ybm)$ and
$\imageDist_1(\cdot|\ybm) \approx \imageDist(\cdot|\ybm)$, that interpolates
between them.

We take the conditional path toward a target $\xbm_1$ to be Gaussian with
covariance confined to $\mathrm{null}(\Abm)$,
\begin{equation}
    \label{eq: definition of p_t}
    \imageDist_t(\xbm | \xbm_1 , \ybm )
        = \mathcal{N}\big(\xbm \,\big|\, \mubm_t(\xbm_1 , \ybm),\, \sigma_t^2(\xbm_1) \projnull \big),
\end{equation}
with time-dependent mean $\mubm_t$ and scale $\sigma_t > 0$ subject to the
endpoint conditions $\mubm_0 = \Abm^\dagger\ybm,\, \sigma_0 = 1$ and
$\mubm_1 = \xbm_1,\, \sigma_1 = \sigma_{\min} \ll 1$, so that
$\imageDist_0 = \startDist(\cdot|\ybm)$ and $\imageDist_1$ concentrates at
$\xbm_1$. Since $\Abm \mubm_t = \ybm$ and $\sigma_t^2 \projnull$ spreads mass only along $\mathrm{null}(\Abm)$, the path
is supported on $\Mcal_\ybm$ for all $t$.

To realize this path deterministically, we use the location-scale map of the
Gaussian, the conditional flow
\begin{equation}
    \label{eq: definition of condit flow}
    \conditflow_t(\xbm \,|\, \ybm) = \sigma_t(\xbm_1 ) \, \projnull \xbm + \mubm_t(\xbm_1, \ybm ),
\end{equation}
which scales the null-space content of $\xbm$ by $\sigma_t$ and centers it at
$\mubm_t$. Pushing a reference $\projnull\xbm \sim \Ncal(\zerobm, \projnull)$
through $\conditflow_t$ reproduces~\eqref{eq: definition of p_t} at every $t$.

The flow $\flow_t(\cdot | \ybm): \Mcal_{\ybm} \rightarrow \Mcal_{\ybm}$ is
generated by the velocity field $\vecfield_t$, the time-dependent field whose
integral curves are its trajectories, solving
\begin{equation}
    \label{eq:flow ode}
    \frac{\d}{\d t} \flow_t(\xbm | \ybm) = \vecfield_t\big(\flow_t(\xbm | \ybm) \,\big|\, \ybm\big),
    \qquad \flow_0(\xbm | \ybm) = \xbm .
\end{equation}
The crucial constraint is that $\vecfield_t \in \mathrm{null}(\Abm)$: a step
along a null-space direction preserves $\Abm\xbm = \ybm$, so $\flow_t$ maps
$\Mcal_\ybm$ into itself and every state remains measurement-consistent.

We instantiate the mean and scale to vary linearly in $t$,
\begin{equation}
    \label{eq:schedules}
    \mubm_t(\xbm_1 , \ybm) = t\, \projnull \xbm_1 + \Abm^\dagger \ybm,
    \qquad
    \sigma_t(\xbm_1) = 1 - (1-\sigma_{\text{min}}) t,
\end{equation}
so that~\eqref{eq: definition of condit flow} becomes
\begin{equation}
    \label{eq: the flow we use}
    \conditflow_t(\xbm \,|\, \ybm) = \big(1 - (1-\sigma_{\text{min}})t\big)\, \projnull \xbm + t\, \projnull \xbm_1 + \Abm^\dagger \ybm,
\end{equation}
moving the null-space content from $\projnull\xbm$ to $\projnull\xbm_1$ while
holding the anchor $\Abm^\dagger\ybm$ fixed.
Differentiating~\eqref{eq: the flow we use} in $t$ and re-expressing at the
current state $\xbm$ yields the velocity field
\begin{equation}
    \label{eq:instantaneous velocity}
    \vecfield_t(\xbm | \ybm) = \projnull\big(\xbm_1 - (1 - \sigma_{\text{min}})\,\xbm\big) \in \mathrm{null}(\Abm) .
\end{equation}
Lemmas~\ref{lemma: my Lipman thm 1} and~\ref{lemma: my Lipman thm 3} in the appendix extend
\cite{lipman2022flow} to show $\vecfield_t$
generates~\eqref{eq: definition of p_t} on $\Mcal_\ybm$, so the flow transports
$\startDist(\cdot|\ybm)$ to $\imageDist(\cdot|\ybm)$. 

\subsection{One-Step Sampling via Mean Flow}
\label{sec:mean flow}

Sampling with the flow of Section~\ref{sec:flow matching} requires integrating
the instantaneous velocity~\eqref{eq:instantaneous velocity} from $t=0$ to
$t=1$, which costs many network evaluations. To reconstruct in a single step,
we instead learn the \emph{average} velocity of the flow, following the Mean
Flow framework~\cite{geng.etal2026}, but restricted to $\Mcal_\ybm$.

Writing $\meanvecfield$ for the average velocity and $\vecfield$ for the
instantaneous velocity, the \emph{average conditional velocity} between two
times $r \leq t$ is the time-average of $\vecfield$ along the trajectory,
\begin{equation}
    \label{eq: average conditional velocity}
    \meanvecfield_{r, t}(\xbm_t | \ybm) := \frac{1}{t-r} \int_r^t \vbm_{\tau}(\xbm_{\tau} | \ybm) \, \d\tau,
\end{equation}
with $0 \leq r \leq t \leq 1$. The value of learning $\meanvecfield_{r,t}$ directly is that it enables production of a sample by a single jump from $r=0$ to $t=1$, with no step-by-step integration of $\vecfield$.

We convert~\eqref{eq: average conditional velocity} into a pointwise constraint
relating $\meanvecfield$ and $\vecfield$ that is better suited for learning.
Multiplying by $(t-r)$ and differentiating in $t$ gives the \emph{MeanFlow
identity} over $\Mcal_\ybm$,
\begin{equation}
    \label{eq: meanflow identity}
    \meanvecfield_{r, t}(\xbm_t | \ybm) = \vbm_t(\xbm_t | \ybm) - (t-r) \frac{\d}{\d t} \meanvecfield_{r, t}(\xbm_t | \ybm),
\end{equation}
which reduces to
$\meanvecfield_{t,t} = \vbm_t$ when $r = t$. Crucially,
\eqref{eq: meanflow identity} involves only $\vecfield_t$ and a derivative of
$\meanvecfield_{r,t}$, both available without integration, turning \eqref{eq: average conditional velocity} into a regression target.

This identity yields a training objective whose minimizer is the desired
average velocity. We parametrize $\meanvecfield^{\thetabm}$ by a network and
regress it against the right-hand side of~\eqref{eq: meanflow identity}, with
$\vecfield_t$ supplied in closed form by~\eqref{eq:instantaneous velocity}.

\begin{proposition} \label{prop:main theorem}
    Let $\Abm$ be a fixed matrix, $\ybm$ the noiseless measurement, and
    $\projnull := \Ibm - \Abm^\dagger \Abm$ the null-space projection.
    Assume $\meanvecfield^{\thetabm}$ is a universal approximator of vector
    fields on $\mathrm{null}(\Abm)$. If
    $\meanvecfield^{\thetabm^*}_{r, t}(\xbm | \ybm )$ is the global minimizer
    of
    \begin{equation*}
    \E\left\| \meanvecfield^{\thetabm}_{r,t}(\zbm | \ybm) - \vecfield + (t-r) \frac{\d}{\d t}\meanvecfield^{\thetabm}_{r,t}(\zbm | \ybm) \right\|_2^2,
\end{equation*}
where the expectation is over $r \leq t \sim \Ucal[0,1]$, $(\xbm_1, \ybm) \sim \startDist(\xbm_1, \ybm)$, $\bm{\epsilon} \sim \Ncal(\zerobm, \Ibm)$, and
\begin{align*}
    \zbm &:= (1 - (1-\sigma_{\min})t)\, \projnull \bm{\epsilon} + t\, \projnull \xbm_1 + \Abm^\dagger \ybm, \\
    \vecfield &:= \projnull(\xbm_1 - (1 - \sigma_{\min}) \bm{\epsilon} ).
\end{align*}
    then $\meanvecfield^{\thetabm^*}_{r, t}(\xbm | \ybm )$ is a Mean Flow in the
    null space of $\Abm$ that samples from $\imageDist(\xbm | \ybm)$.
\end{proposition}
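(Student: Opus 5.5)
The plan has three parts. First I reduce the conditional regression objective to a marginal one, in the style of conditional flow matching. Then I identify the minimizer through the MeanFlow identity restricted to $\Mcal_\ybm$. Finally I show that a single jump from $r=0$ to $t=1$ with this minimizer produces samples from $\imageDist(\cdot|\ybm)$.

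\textbf{Step 1: from conditional to marginal targets.} Fix $(r,t,\ybm)$. The map $\zbm\mapsto \meanvecfield^{\thetabm}_{r,t}(\zbm|\ybm)$ and the total derivative $\frac{\d}{\d t}\meanvecfield^{\thetabm}_{r,t}(\zbm|\ybm) = \vecfield\cdot\nabla_\zbm \meanvecfield^{\thetabm}_{r,t} + \partial_t \meanvecfield^{\thetabm}_{r,t}$ depend on the sample $(\xbm_1,\bm\epsilon)$ only through $\zbm$ and $\vecfield$. I would do two things with this.

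\begin{itemize}
\item Following the MeanFlow treatment~\cite{geng.etal2026}, treat the $\frac{\d}{\d t}$ term as a stop-gradient target, as in Algorithm~\ref{algo:training}.
\item Expand the square. Every term involving the random conditional velocity $\vecfield$ is then either independent of $\thetabm$ or linear in $\vecfield$.
\end{itemize}

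By the tower property, conditioning on $\zbm$ replaces $\vecfield$ by the marginal velocity $\bar{\vecfield}_t(\zbm|\ybm):=\E[\vecfield\mid \zbm,\ybm]$, up to a $\thetabm$-independent constant. The same substitution applies to the derivative term if the jvp is evaluated along $\bar\vecfield$; I will state this as the intended interpretation.

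By Lemmas~\ref{lemma: my Lipman thm 1} and~\ref{lemma: my Lipman thm 3}, $\bar\vecfield_t$ generates the marginal path $\imageDist_t(\cdot|\ybm)$ on $\Mcal_\ybm$, going from $\startDist(\cdot|\ybm)$ to (approximately) $\imageDist(\cdot|\ybm)$. Since $\vecfield=\projnull(\cdot)$, the conditional expectation also lies in $\mathrm{null}(\Abm)$.

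\textbf{Step 2: identify the minimizer.} The reduced objective is $\E\|\meanvecfield^{\thetabm}_{r,t}-\bar\vecfield_t+(t-r)\frac{\d}{\d t}\meanvecfield^{\thetabm}_{r,t}\|^2 \ge 0$. Equality holds exactly when the MeanFlow identity~\eqref{eq: meanflow identity} holds with $\bar\vecfield$, for almost every $(r,t,\zbm)$.

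To show the true average velocity $\meanvecfield_{r,t}$ of~\eqref{eq: average conditional velocity}, taken along $\flow$, attains zero:
\begin{itemize}
\item Multiply by $(t-r)$ and differentiate in $t$ along trajectories of $\flow$. This gives~\eqref{eq: meanflow identity}.
\item It lies in $\mathrm{null}(\Abm)$, being an average of null-space vectors, so universal approximation on $\mathrm{null}(\Abm)$ makes it representable.
\end{itemize}

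For uniqueness, let $g(t):=(t-r)\meanvecfield^{\thetabm^*}_{r,t}(\flow_t(\xbm)|\ybm)$ along a trajectory. The identity says $g'(t)=\bar\vecfield_t(\flow_t)$ with $g(r)=0$. Integrating gives $g(t)=\int_r^t\bar\vecfield_\tau\,\d\tau$, so the minimizer equals the true average velocity. The same conclusion holds on the support of $\imageDist_t$, since points of $\Mcal_\ybm$ there are hit by trajectories.

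\textbf{Step 3: one-step sampling.} Setting $r=0$, $t=1$ gives $\xbm_0+\meanvecfield^{\thetabm^*}_{0,1}(\xbm_0|\ybm)=\xbm_0+\int_0^1\bar\vecfield_\tau\,\d\tau=\flow_1(\xbm_0|\ybm)$. For $\xbm_0\sim\startDist(\cdot|\ybm)$ this is distributed as $\imageDist_1(\cdot|\ybm)\approx\imageDist(\cdot|\ybm)$, with equality as $\sigma_{\min}\to0$. The increment lies in $\mathrm{null}(\Abm)$, so the sample stays in $\Mcal_\ybm$.

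\textbf{Main obstacle.} The delicate point is Step 1: justifying that the $\frac{\d}{\d t}$ term, which uses the conditional $\vecfield$ in the jvp, still yields the marginal identity. I would handle this by appealing to the stop-gradient formulation, or by noting that at the fixed point the target is linear in $\vecfield$. A second, minor issue is that a.e. equality over $\zbm$ requires $\imageDist_t$ to have full support on $\Mcal_\ybm$. This holds for $t<1$ because the Gaussian has covariance $\sigma_t^2\projnull$.
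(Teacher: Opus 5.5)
Your Step~3 does not follow from your Step~2: $\meanvecfield^{\thetabm^*}_{0,1}$ is evaluated at the wrong point. The identity~\eqref{eq: meanflow identity} differentiates in the \emph{later} time $t$ with $r$ fixed. Accordingly, your $g(t)=(t-r)\meanvecfield^{\thetabm^*}_{r,t}(\flow_t(\xbm|\ybm)|\ybm)$ anchors the average velocity at the state at time $t$. Integrating $g'(t)=\frac{\d}{\d t}\flow_t(\xbm|\ybm)$ with $g(r)=0$ gives $(t-r)\meanvecfield^{\thetabm^*}_{r,t}(\flow_t(\xbm|\ybm)|\ybm)=\flow_t(\xbm|\ybm)-\flow_r(\xbm|\ybm)$. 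At $(r,t)=(0,1)$ this reads
\begin{equation*}
\xbm_0=\xbm_1-\meanvecfield^{\thetabm^*}_{0,1}(\xbm_1|\ybm),\qquad \xbm_1=\flow_1(\xbm_0|\ybm).
\end{equation*}
In the proposition's parametrization of $\zbm$, $t=0$ is the noise end and $t=1$ the data end. So what you have characterized is the one-step map $\xbm\mapsto\xbm-\meanvecfield^{\thetabm^*}_{0,1}(\xbm|\ybm)=\flow_1^{-1}(\xbm|\ybm)$, which sends data to noise. At a noise point, Step~2 actually gives $\meanvecfield^{\thetabm^*}_{0,1}(\xbm_0|\ybm)=\xbm_0-\flow_1^{-1}(\xbm_0|\ybm)$. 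Your update therefore returns $2\xbm_0-\flow_1^{-1}(\xbm_0|\ybm)$, which is not a sample from $\imageDist_1(\cdot|\ybm)$. The claimed equality with $\xbm_0+\int_0^1\bar\vecfield_\tau\,\d\tau=\flow_1(\xbm_0|\ybm)$ is false in general.

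To repair this, the noise must sit at the anchored endpoint. The simplest fix is the convention Algorithm~\ref{algo:training} actually uses: reverse time, taking $\zbm=t\,\projnull\bm{\epsilon}+(1-t)\,\projnull\xbm_1+\Abm^\dagger\ybm$ with conditional velocity $\projnull(\bm{\epsilon}-\xbm_1)$. Your Steps~1--2 then go through unchanged and yield $\zbm_0=\zbm_1-\meanvecfield^{\thetabm^*}_{0,1}(\zbm_1|\ybm)$, with $\zbm_1\sim\startDist(\cdot|\ybm)$ and $\zbm_0\sim\imageDist(\cdot|\ybm)$; this is exactly Algorithm~\ref{algo:sampling}. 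Alternatively, keep the proposition's time direction but anchor at $\xbm_r$ and differentiate in $r$. That gives $\meanvecfield_{r,t}(\xbm_r|\ybm)=\vecfield_r(\xbm_r|\ybm)+(t-r)\frac{\d}{\d r}\meanvecfield_{r,t}(\xbm_r|\ybm)$ and requires a correspondingly modified loss.

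The paper's proof never writes the one-step map; it stops at ``the minimizer satisfies~\eqref{eq: meanflow identity}'' and defers sufficiency to~\cite{geng.etal2026}. It therefore never confronts this mismatch between the proposition's convention and its algorithms, whereas your Step~3 states the incorrect claim explicitly.

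Apart from this, your route is the paper's: Lemmas~\ref{lemma: my Lipman thm 1} and~\ref{lemma: my Lipman thm 3} for the restricted marginal flow, the identity via the fundamental theorem of calculus, sufficiency by integrating along trajectories, and universal approximation. Your Step~1 is in fact more careful than the paper. With the conditional $\vecfield$ as jvp tangent and no stop-gradient, the literal loss contains the $\thetabm$-dependent term $\E\|(\Ibm-(t-r)\nabla_\zbm\meanvecfield^{\thetabm})(\vecfield-\bar\vecfield)\|^2$. The paper's assertion that the global minimizer satisfies the identity therefore really needs your stop-gradient/linearity argument, or Algorithm~\ref{algo:training}'s choice of the network's own $\meanvecfield^{\thetabm}_{t,t}(\zbm|\ybm)$ as the jvp tangent.
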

The proof is given in the Appendix. This is the first theoretical result establishing a one-step posterior sampler for imaging inverse problems, extending
the unconditional Mean Flow of~\cite{geng.etal2026} to the subspace-restricted
flow of Section~\ref{sec:flow matching}. Since $\meanvecfield^{\thetabm}$ is a
universal approximator, the global minimum of the $\ell_2$ loss enforces the
MeanFlow identity~\eqref{eq: meanflow identity} pointwise, and the resulting
field is the average velocity of a flow that transports $\startDist(\cdot|\ybm)$
to $\imageDist(\cdot|\ybm)$ on $\Mcal_\ybm$.
Algorithms~\ref{algo:training} and~\ref{algo:sampling} provide the implementation details of \methodabbrev, where reconstruction is initialized with $\Abm^\dagger\ybm + \projnull\bm{\epsilon}$ and
$\meanvecfield^{\thetabm^*}_{0,1}$ is evaluated once.

\section{Numerical Evaluation}
\label{sec:numerical evaluation}

\begin{table}
  \caption{Quantitative results on natural image inpainting problem. \hlgreen{\textbf{Best values}} and \hlblue{second-best} are color coded per metric.}
  \label{ffhq_res}
  \centering
  \begin{tabular}{lcccc}
    \toprule
    Method & NFE $\downarrow$ & LPIPS $\downarrow$ & PSNR $\uparrow$ & SSIM $\uparrow$ \\
    \midrule
    U-Net       & \hlgreen{\textbf{\ 1 \ }} & 0.123 & \hlgreen{26.59} & \hlblue{0.896} \\
    DPIR        & \hlblue{200} & 0.264 & 18.42 & 0.797 \\
    DPS         & 1000 & 0.157 & 23.89 & 0.829 \\
    DiffPIR     & \hlblue{200} & 0.101 & \hlblue{25.12} & 0.871 \\
    PnP-Flow   & 500 & \hlblue{0.089} & 24.80 & \hlgreen{0.900}\\
    Flower     &  500 & 0.105 & 25.03 & 0.892 \\
    \methodabbrev & \hlgreen{\textbf{\ 1 \ }} & \hlgreen{0.055} & 24.54 & 0.874 \\
    \bottomrule
  \end{tabular}
\end{table}

\begin{figure}[t]
    \centering
    \includegraphics[width=\linewidth]{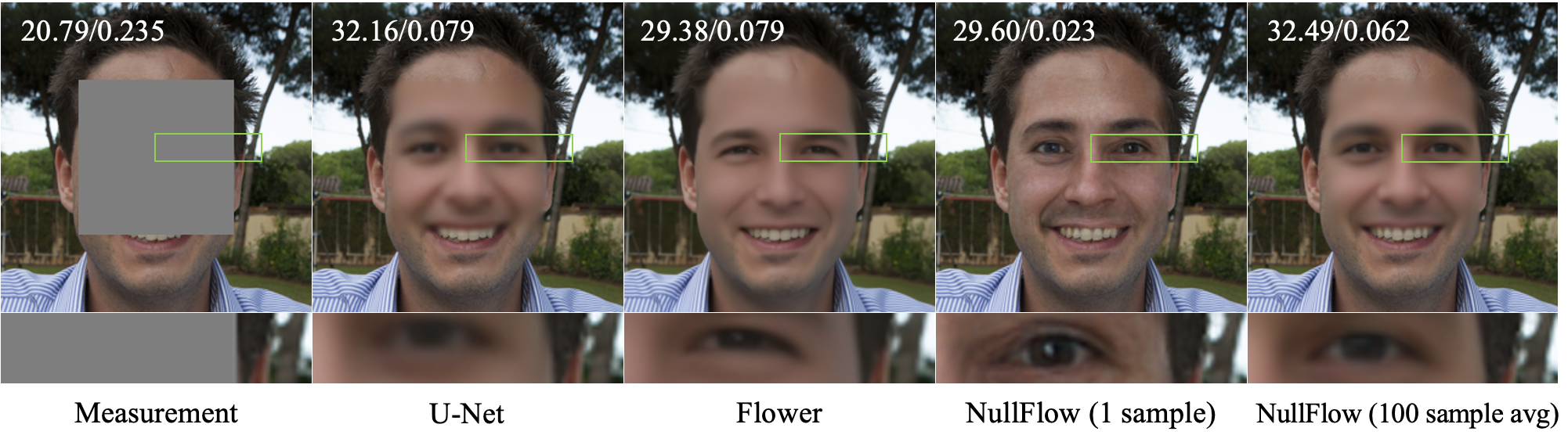}
    \caption{
Reconstructions on a test image. Left to right: measurement, U-Net, Flower, a single NullFlow sample, and the average of 100 NullFlow samples. PSNR/LPIPS are shown in the top-left corner of each. A single NullFlow sample is sharp and perceptually faithful (best LPIPS), whereas averaging many samples approaches the MMSE estimate, trading perceptual quality for lower distortion and visibly resembling the MSE-trained U-Net.
    }
    \label{fig: comparison}
\end{figure}

We illustrate \methodabbrev on the problem of image inpainting using the FFHQ dataset. 
We consider noiseless center-mask inpainting on $256\times256$ RGB images,
where a fixed $128\times128$ center patch is removed. During training, observed
pixels are fixed and missing pixels are filled with independent Gaussian noise.

Following~\cite{pixelmeanflows}, we train the network with the PMF objective
\begin{equation}
\mathcal{L}_{\mathrm{PMF}}
=
\mathop{\mathbb{E}}\limits_{r,t,\xbm,\ybm,\bm{\epsilon}}
\left\|
\meanvecfield^{\theta}_{r,t}(\xbm_t|\ybm)
-
\meanvecfield_{r,t}(\xbm_t|\ybm)
\right\|_2^2 ,
\label{eq:lpmf}
\end{equation}
where $\meanvecfield_{r,t}(\xbm_t|\ybm)$ is the target defined by
\eqref{eq: meanflow identity}. The total derivative in this target is evaluated
efficiently using a Jacobian-vector product, avoiding explicit formation of the
network Jacobian. To improve visual fidelity, we augment $\mathcal{L}_{\mathrm{PMF}}$ with
VGG-based LPIPS~\cite{zhang2018unreasonable} and frozen ConvNeXt-V2 feature
regularization~\cite{woo2023convnext}:
\begin{equation}
\label{eq:ffhqloss}
\mathcal{L}
=
\mathcal{L}_{\mathrm{PMF}}
+
\lambda_{\mathrm{p}}\ell_{\mathrm{p}}(\hat{\bm x},\bm x)
+
\lambda_{\mathrm{c}}\|\phi(\hat{\bm x})-\phi(\bm x)\|_2^2 ,
\end{equation}
where $\phi$ denotes the ConvNeXt-V2 feature extractor. We set
$\lambda_{\mathrm{p}}=0.4$ and $\lambda_{\mathrm{c}}=0.1$.

At inference, we initialize with the null-space-perturbed least-squares
estimate $\Abm^\dagger\ybm + \projnull\bm{\epsilon}$: for center-mask
inpainting, this preserves the observed pixels and fills the missing center
with Gaussian noise. The trained model maps this initialization to the final
reconstruction in a single step.


As shown in Table~\ref{ffhq_res}, \methodabbrev achieves the best LPIPS with
only one function evaluation, while maintaining competitive PSNR and SSIM.
U-Net obtains the highest distortion metrics, as expected from its MSE
objective, but its reconstructions are overly smooth and have worse perceptual
quality. This perception--distortion tradeoff is also visible in
Fig.~\ref{fig: comparison}: a single \methodabbrev sample gives sharper and
more faithful details, whereas averaging many samples produces a smoother
MSE/MMSE-like estimate. Fig.~\ref{fig:NullFlow to MSE} confirms this trend
quantitatively, showing that sample averaging improves PSNR and SSIM but
degrades LPIPS. \methodabbrev thus provides both a perceptual one-step reconstruction and, through sample averaging, a distortion-oriented estimate
without retraining.

\begin{figure*}[t]
    \centering
    \includegraphics[width=\linewidth]{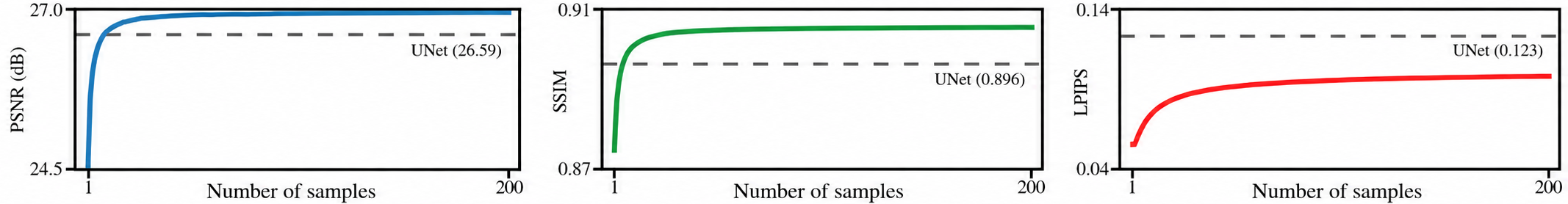}
    \caption{
        Effect of sample averaging on reconstruction quality. Averaging an increasing number of posterior samples approximates the MMSE estimator, leading to improved PSNR and SSIM but degraded LPIPS. The dashed lines denote the performance of a supervised U-Net trained to minimize MSE. Notably, while MMSE averaging can surpass U-Net in distortion-based metrics, it produces perceptually smoother reconstructions.
    }
    \label{fig:NullFlow to MSE}
\end{figure*}

\section{Conclusion}
We introduced \methodabbrev, the first one-step generative solver for imaging
inverse problems. By confining a Mean Flow to the null space of the forward
operator, every state stays measurement-consistent by construction and
\methodabbrev samples from $\imageDist(\xbm|\ybm)$ in a single network
evaluation. On image inpainting it matches the perceptual quality of iterative
diffusion and flow solvers at a fraction of the cost. Extending \methodabbrev
to noisy measurements and to sample-varying operators are promising future
directions.

\section*{Acknowledgment}

This work was supported in part by the National Science Foundation under Grants No. 2622128 and No. 2625643 (CAREER).

{
\small

\bibliographystyle{IEEEbib}

}

\newpage
\appendix
\section*{Appendix}
\addcontentsline{toc}{section}{Appendix}

Lemmas~\ref{lemma: my Lipman thm 1},~\ref{lemma: my Lipman thm 2}, and~\ref{lemma: my Lipman thm 3} extend
Theorems 1,2, and 3, respectively, of~\cite{lipman2022flow} to measurement-conditional probability paths and vector fields on $\Mcal_\ybm$. 
Building on this affine-subspace flow,
Proposition~\ref{prop:main theorem} extends~\cite{geng.etal2026} to a Mean Flow
restricted to $\Mcal_\ybm$.

\section{Technical Lemmas} \label{app:lemmas}
\begin{lemma} \label{lemma: my Lipman thm 1}
    Let $\vecfield_t(\xbm | \xbm_1, \ybm)$ be conditional vector fields on $\Mcal_\ybm$ that generate probability path  $p_t(\xbm | \xbm_1, \ybm)$ on $\Mcal_\ybm$.
    For any distribution $q(\xbm_1 | \ybm)$ on $\Mcal_\ybm$, the marginal vector field
    \begin{equation*}
        \vecfield_t(\xbm | \ybm) = \int_{\Mcal_\ybm} \vecfield_t(\xbm | \xbm_1, \ybm) \frac{p_t(\xbm | \xbm_1, \ybm) q(\xbm_1 | \ybm)}{p_t(\xbm | \ybm)} \, \d \mathrm{vol}_{\Mcal_\ybm}
    \end{equation*}
    generates the marginal probability path
    \begin{equation*}
        p_t(\xbm | \ybm) = \int_{\Mcal_\ybm} p_t(\xbm | \xbm_1, \ybm) q(\xbm_1 | \ybm) \, \d \mathrm{vol}_{\Mcal_\ybm}.
    \end{equation*}
\end{lemma}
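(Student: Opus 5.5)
We follow the proof of Theorem 1 of \cite{lipman2022flow}, carried out intrinsically on the affine subspace $\Mcal_\ybm$. Let $k := \dim \mathrm{null}(\Abm)$ and fix a point $\xbm^\circ \in \Mcal_\ybm$, e.g.\ $\xbm^\circ = \Abm^\dagger\ybm$. Choose an orthonormal basis $\Ubm \in \R^{n\times k}$ of $\mathrm{null}(\Abm)$, so that $\projnull = \Ubm\Ubm^\top$. The map $\bm{\xi} \mapsto \xbm^\circ + \Ubm\bm{\xi}$ is an isometry from $\R^k$ onto $\Mcal_\ybm$, and $\d\mathrm{vol}_{\Mcal_\ybm}$ pulls back to Lebesgue measure on $\R^k$. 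A vector field on $\Mcal_\ybm$ takes values in the tangent space $\mathrm{null}(\Abm)$, so it pulls back to the $\R^k$-valued field $\Ubm^\top \vecfield_t$. Densities on $\Mcal_\ybm$ pull back to densities on $\R^k$. Under this identification, ``$\vecfield_t$ generates $p_t$ on $\Mcal_\ybm$'' is equivalent to the Euclidean continuity equation
\[
\partial_t p_t + \mathrm{div}_{\Mcal_\ybm}(p_t \vecfield_t) = 0 ,
\]
where the divergence is taken in the $\bm{\xi}$ coordinates. This equivalence is the standard fact that a flow generates a path if and only if the continuity equation holds.

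The core computation is then a few lines. Because each conditional pair satisfies $\partial_t p_t(\xbm|\xbm_1,\ybm) = -\mathrm{div}_{\Mcal_\ybm}\big(p_t(\xbm|\xbm_1,\ybm)\vecfield_t(\xbm|\xbm_1,\ybm)\big)$, we differentiate the marginal under the integral sign:
\[
\partial_t p_t(\xbm|\ybm) = \int_{\Mcal_\ybm} \partial_t p_t(\xbm|\xbm_1,\ybm)\, q(\xbm_1|\ybm)\,\d\mathrm{vol} = -\int_{\Mcal_\ybm} \mathrm{div}_{\Mcal_\ybm}\big(p_t(\xbm|\xbm_1,\ybm)\vecfield_t(\xbm|\xbm_1,\ybm)\big)\, q(\xbm_1|\ybm)\,\d\mathrm{vol}.
\]
The divergence acts in $\xbm$ and the integral is over $\xbm_1$, so we pull the divergence out. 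The remaining integrand is exactly $p_t(\xbm|\ybm)\,\vecfield_t(\xbm|\ybm)$ by the definition of the marginal field. This gives $\partial_t p_t(\xbm|\ybm) + \mathrm{div}_{\Mcal_\ybm}\big(p_t(\xbm|\ybm)\vecfield_t(\xbm|\ybm)\big) = 0$.

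We also check that the marginal field is tangent to $\Mcal_\ybm$. It is an average of vectors in $\mathrm{null}(\Abm)$, which is a linear subspace, so it lies in $\mathrm{null}(\Abm)$. Hence its flow preserves $\Mcal_\ybm$, and by the equivalence above it generates $p_t(\cdot|\ybm)$. The initial condition also matches: $p_0(\cdot|\ybm) = \startDist(\cdot|\ybm)$, since each conditional path starts at $\startDist(\cdot|\ybm)$.

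The main obstacle is the technical justification rather than the algebra. Two points need care. First, exchanging $\partial_t$ and $\mathrm{div}$ with the $\xbm_1$-integral requires regularity and integrability assumptions, as in \cite{lipman2022flow}: smoothness of the conditional paths and fields, plus a dominating integrable bound. Second, the marginal field must be well defined where $p_t(\xbm|\ybm) > 0$; we take it to be zero elsewhere, which does not affect the continuity equation. Degeneracy is avoided by the coordinate reduction. The paths $\Ncal(\cdot,\sigma_t^2\projnull)$ have no Lebesgue density in $\R^n$, but after pulling back they become nondegenerate Gaussians $\Ncal(\cdot,\sigma_t^2 \Ibm_k)$ on $\R^k$. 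This is why the argument must be run intrinsically on $\Mcal_\ybm$ rather than in ambient coordinates.
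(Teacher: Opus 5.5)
Your proposal is correct and follows the paper's own route: the paper proves this lemma by rerunning Theorem 1 of Lipman et al.\ with integrals over $\Mcal_\ybm$ in place of $\R^n$, and your isometric coordinates on $\Mcal_\ybm = \Abm^\dagger\ybm + \mathrm{null}(\Abm)$ and your tangency check simply make that modification explicit, including your point that the Gaussian paths have no Lebesgue density in $\R^n$, which is why the reduction is needed. One minor caution: the closing initial-condition remark is not needed for the lemma, and since the lemma reuses $q(\xbm_1|\ybm)$ for the distribution of the endpoint $\xbm_1$, it only holds for the paper's specific Gaussian paths (whose time-zero conditional does not depend on $\xbm_1$) and does not follow from the lemma's general hypotheses.
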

\begin{proof}
    The proof follows that of Theorem 1 in~\cite{lipman2022flow}, with the modification of taking integrals over the manifold, $\int d\mathrm{vol}_{\Mcal_{\ybm}}$, rather than all of $\R^n$, ($\int \d\xbm$).
\end{proof}
To train Flow Matching (FM) models, the following intractable loss must be minimized.
\begin{equation*}
    \Lcal_{FM}(\theta) = \E_{t, p_t(\xbm)} \| \vecfield_t^{\theta}(\xbm | \ybm ) - \vecfield_t(\xbm | \ybm ) \|^2.
\end{equation*}
The following Lemma is used to show that Conditional Flow Matching (CFM) loss 
\begin{equation*}
    \Lcal_{CFM}(\theta) = \E_{t, \imageDist(\xbm_1), p_t(\xbm | \xbm_1)} \| \vecfield_t^{\theta}(\xbm | \ybm ) - \vecfield_t(\xbm | \ybm, \xbm_1 ) \|^2,
\end{equation*}
can be used instead of the Flow Matching (FM) loss. 
For more details regarding FM and CFM, we refer the reader to~\cite{lipman2022flow}.
\begin{lemma} \label{lemma: my Lipman thm 2}
    Assuming that $p_t(\xbm | \ybm) > 0$ for all $\xbm \in \Mcal_\ybm$ and $t \in [0,1]$, then $\nabla_\theta \Lcal_{FM}(\thetabm) = \nabla_\theta \Lcal_{CFM}(\thetabm)$.
\end{lemma}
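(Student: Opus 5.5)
The plan is to follow the proof of Theorem 2 in~\cite{lipman2022flow} essentially verbatim, replacing Lebesgue measure on $\R^n$ by the volume measure $\d\mathrm{vol}_{\Mcal_\ybm}$ on the affine subspace $\Mcal_\ybm$. This substitution is legitimate for three reasons. First, $\Mcal_\ybm$ is a translate of $\mathrm{null}(\Abm)$, so $\d\mathrm{vol}_{\Mcal_\ybm}$ is simply Lebesgue measure in coordinates of an orthonormal basis of $\mathrm{null}(\Abm)$, and Fubini applies without change. Second, all densities $p_t(\xbm|\xbm_1,\ybm)$, $p_t(\xbm|\ybm)$ and $q(\xbm_1|\ybm)$ are taken with respect to this measure, as in Lemma~\ref{lemma: my Lipman thm 1}. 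Third, every vector field involved takes values in $\mathrm{null}(\Abm)$, so the squared norms are computed in the same space.

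The first step expands both losses. Writing $\|a-b\|^2=\|a\|^2-2\langle a,b\rangle+\|b\|^2$, we have
\begin{align*}
\|\vecfield^{\thetabm}_t-\vecfield_t(\xbm|\ybm)\|^2&=\|\vecfield^{\thetabm}_t\|^2-2\langle \vecfield^{\thetabm}_t,\vecfield_t(\xbm|\ybm)\rangle+\|\vecfield_t(\xbm|\ybm)\|^2,\\
\|\vecfield^{\thetabm}_t-\vecfield_t(\xbm|\xbm_1,\ybm)\|^2&=\|\vecfield^{\thetabm}_t\|^2-2\langle \vecfield^{\thetabm}_t,\vecfield_t(\xbm|\xbm_1,\ybm)\rangle+\|\vecfield_t(\xbm|\xbm_1,\ybm)\|^2 .
\end{align*}
In each expansion the last term does not depend on $\thetabm$, so it vanishes under $\nabla_{\thetabm}$. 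It therefore suffices to show that the first two terms have equal expectations under the respective distributions.

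The second step handles the quadratic term. By the marginal formula of Lemma~\ref{lemma: my Lipman thm 1},
\[
\E_{p_t(\xbm|\ybm)}\|\vecfield^{\thetabm}_t(\xbm|\ybm)\|^2=\int\!\!\int \|\vecfield^{\thetabm}_t(\xbm|\ybm)\|^2\,p_t(\xbm|\xbm_1,\ybm)\,q(\xbm_1|\ybm)\,\d\mathrm{vol}\,\d\mathrm{vol}.
\]
Exchanging the order of integration via Fubini gives exactly the CFM expectation of the same quantity.

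The third step handles the cross term. We substitute the definition of the marginal field $\vecfield_t(\xbm|\ybm)$ from Lemma~\ref{lemma: my Lipman thm 1} into $\int \langle \vecfield^{\thetabm}_t,\vecfield_t(\xbm|\ybm)\rangle\,p_t(\xbm|\ybm)\,\d\mathrm{vol}$. The factor $p_t(\xbm|\ybm)$ cancels against the denominator, which is where the positivity assumption $p_t(\xbm|\ybm)>0$ is used. What remains is $\int\!\!\int \langle \vecfield^{\thetabm}_t,\vecfield_t(\xbm|\xbm_1,\ybm)\rangle\,p_t(\xbm|\xbm_1,\ybm)\,q(\xbm_1|\ybm)$, which is the CFM cross term. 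Averaging over $t$ and then differentiating in $\thetabm$ yields the equality of gradients.

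The main obstacle is bookkeeping rather than analysis. We must confirm that Fubini and differentiation under the integral sign are justified. Following~\cite{lipman2022flow}, we will assume the same mild regularity: bounded second moments and integrability of $\vecfield^{\thetabm}_t$ and its $\thetabm$-gradient. We must also confirm that the expectations over $\xbm$ really are over $\Mcal_\ybm$ and not over $\R^n$. Because $p_t$ is degenerate as a density on $\R^n$ (its covariance is $\sigma_t^2\projnull$), we will state explicitly that every density is taken relative to $\d\mathrm{vol}_{\Mcal_\ybm}$. With respect to that measure the path~\eqref{eq: definition of p_t} is nondegenerate and positive, so the positivity hypothesis is meaningful.
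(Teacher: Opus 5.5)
Your proposal is correct and takes the same approach as the paper. The paper's proof just invokes Theorem 2 of \cite{lipman2022flow} with integrals against $\d\mathrm{vol}_{\Mcal_\ybm}$ in place of $\d\xbm$ over $\R^n$. You carry out that adaptation explicitly (expanding the squares, Fubini for the quadratic term, cancelling $p_t(\xbm|\ybm)$ in the cross term), and you state the regularity and density-with-respect-to-$\mathrm{vol}_{\Mcal_\ybm}$ conventions that the paper leaves implicit.
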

\begin{proof}
    The proof follows that of Theorem 2 in~\cite{lipman2022flow}, with the modification of taking integrals over the manifold, $\int \d\mathrm{vol}_{\Mcal_{\ybm}}$, rather than all of $\R^n$, ($\int \d\xbm$).
\end{proof}

\begin{lemma} \label{lemma: my Lipman thm 3}
Let $p_t(\xbm | \xbm_1, \ybm)$ be a Gaussian probability path restricted to $\Mcal_\ybm$ as defined in Eq.~\eqref{eq: definition of p_t}, and $\conditflow_t$ its corresponding flow map as defined in Eq.~\eqref{eq: definition of condit flow}. 
Then, the unique vector field that defines $\conditflow_t$ has the form
\begin{equation*}
    \vecfield_t(\xbm|\xbm_1, \ybm)  = \frac{\dot{\sigma}_t(\xbm_1 )}{ \sigma_t(\xbm_1 )} \big(\xbm - \mubm_t(\xbm_1 , \ybm) \big) + \dot{\mubm}_t(\xbm_1 , \ybm),
\end{equation*}
for $\xbm, \xbm_1 \in \Mcal_\ybm$.
Consequently, $\vecfield_t(\xbm|\xbm_1, \ybm)$ generates the Gaussian path $p_t(\xbm | \xbm_1, \ybm)$.
\end{lemma}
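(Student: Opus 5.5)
We prove Lemma~\ref{lemma: my Lipman thm 3} in the same way as Theorem~3 of~\cite{lipman2022flow}, restricted to the affine subspace $\Mcal_\ybm$. The first step is to show that $\conditflow_t$ maps $\Mcal_\ybm$ bijectively onto itself. Every point of $\Mcal_\ybm$ can be written as $\xbm = \Abm^\dagger\ybm + \projnull\xbm$, and $\Abm\mubm_t = \ybm$. Hence $\conditflow_t(\xbm) = \sigma_t\projnull\xbm + \mubm_t$ satisfies $\Abm\conditflow_t(\xbm) = \ybm$. Its inverse on $\Mcal_\ybm$ is
\[
\conditflow_t^{-1}(\zbm) = \Abm^\dagger\ybm + \sigma_t^{-1}\projnull(\zbm - \mubm_t),
\]
which is well defined because $\sigma_t > 0$. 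This reduces everything to an affine, invertible, time-dependent map on the tangent space $\mathrm{null}(\Abm)$.

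Next we identify the generating field. Differentiating $\conditflow_t$ in $t$ gives $\frac{\d}{\d t}\conditflow_t(\xbm) = \dot\sigma_t\projnull\xbm + \dot\mubm_t$. We must express this at the current state $\zbm = \conditflow_t(\xbm)$. From the inverse, $\projnull\xbm = \projnull(\zbm - \mubm_t)/\sigma_t$. For $\zbm, \mubm_t \in \Mcal_\ybm$ we have $\zbm - \mubm_t \in \mathrm{null}(\Abm)$, so $\projnull(\zbm - \mubm_t) = \zbm - \mubm_t$. This yields
\[
\frac{\d}{\d t}\conditflow_t(\xbm) = \frac{\dot\sigma_t}{\sigma_t}(\zbm - \mubm_t) + \dot\mubm_t ,
\]
so the field $\vecfield_t(\zbm|\xbm_1,\ybm)$ stated in the lemma satisfies the ODE~\eqref{eq:flow ode} with $\flow_t = \conditflow_t$. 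The field takes values in $\mathrm{null}(\Abm)$, because $\dot\mubm_t = \projnull\dot\mubm_t$ (as $\Abm\mubm_t$ is constant) and $\zbm - \mubm_t \in \mathrm{null}(\Abm)$.

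For uniqueness, note that a flow determines its velocity field on its image via $\vecfield_t = \frac{\d}{\d t}\conditflow_t \circ \conditflow_t^{-1}$. Since $\conditflow_t$ is a bijection of $\Mcal_\ybm$, the field is uniquely determined at every point of $\Mcal_\ybm$. The field is affine in $\zbm$ and hence Lipschitz, so the ODE solution is also unique.

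The main obstacle is the final claim that $\vecfield_t$ generates the Gaussian path, i.e. that the push-forward of $p_0$ under $\conditflow_t$ equals $p_t(\cdot|\xbm_1,\ybm)$. This is delicate because the covariance $\sigma_t^2\projnull$ is degenerate: densities exist only with respect to $\d\mathrm{vol}_{\Mcal_\ybm}$, not Lebesgue measure on $\R^n$. To handle this, we choose an orthonormal basis $\bm{U}$ of $\mathrm{null}(\Abm)$ and use coordinates $\xbm = \Abm^\dagger\ybm + \bm{U}\bm{c}$. In these coordinates the reference distribution is $\bm{c} \sim \Ncal(\zerobm, \Ibm_k)$, and $\conditflow_t$ acts as $\bm{c} \mapsto \sigma_t\bm{c} + \bm{U}^\top(\mubm_t - \Abm^\dagger\ybm)$. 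The push-forward is then $\Ncal\big(\bm{U}^\top(\mubm_t - \Abm^\dagger\ybm),\, \sigma_t^2\Ibm_k\big)$, which maps back to $\Ncal(\mubm_t, \sigma_t^2\projnull)$ on $\Mcal_\ybm$. Equivalently, one can verify the continuity equation $\partial_t p_t + \nabla_{\Mcal_\ybm}\cdot(p_t\vecfield_t) = 0$ in these coordinates by the same computation as in~\cite{lipman2022flow}, using the $k$-dimensional change of variables with Jacobian $\sigma_t^k$.
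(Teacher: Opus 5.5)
Your proof is correct and follows essentially the same route as the paper: you invert $\conditflow_t$ on $\Mcal_\ybm$, write $\vecfield_t = \dot{\conditflow}_t\circ\conditflow_t^{-1}$, and use $\zbm-\mubm_t\in\mathrm{null}(\Abm)$ to remove the projection. You go further than the paper in two places: your explicit uniqueness argument, and your orthonormal-coordinate check that $\conditflow_t$ pushes $p_0$ forward to $\Ncal(\mubm_t,\sigma_t^2\projnull)$, supply the ``consequently'' part that the paper's proof leaves implicit.
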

\begin{proof}
We need to satisfy the conditional probability path $\dot{\conditflow_t}(\xbm | \xbm_1, \ybm) = \vecfield_t( \conditflow_t(\xbm | \xbm_1, \ybm) | \xbm_1, \ybm)$.
Let $\xbm \in \Mcal_\ybm$ and $\zbm = \conditflow_t(\xbm | \xbm_1, \ybm)$.
By definition of the flow, $\zbm \in \Mcal_\ybm$.
First, notice that $\conditflow_t$ is a bijection when restricted to $\Mcal_\ybm$,
and so we can write $\xbm = \conditflow^{-1}_t(\zbm | \xbm_1, \ybm)$.
Therefore,
\begin{equation} \label{eq:vf proof step 1}
    \vecfield_t(\zbm | \xbm_1, \ybm) = \dot{\conditflow}_t \big(\conditflow^{-1}_t(\zbm | \xbm_1, \ybm) | \xbm_1, \ybm \big).
\end{equation}
Since $\sigma_t > 0$,
the inverse on $\Mcal_\ybm$ is given by
\begin{align}
    \conditflow_t^{-1}(\zbm | \xbm_1, \ybm) &= \sigma^{-1}_t(\xbm_1 ) \projnull \big(\zbm - \mubm_t(\xbm_1 , \ybm) \big) + \Abm^\dagger\ybm \nonumber \\
         &= \sigma^{-1}_t(\xbm_1 ) \big(\zbm - \mubm_t(\xbm_1 , \ybm) \big) + \Abm^\dagger\ybm,   \label{eq:vf proof step 2}
\end{align}
where we use that $\zbm - \mubm_t  \in \mathrm{null}(\Abm)$ since $\zbm, \mubm_t \in \Mcal_\ybm$.
We add $\Abm^\dagger \ybm$ to recover $\zbm \in \Mcal_\ybm$.  
The time derivative of $\conditflow_t$ is
\begin{align} 
    \dot{\conditflow}_t(\xbm | \xbm_1, \ybm) &= \dot{\sigma}_t(\xbm_1) \projnull \xbm + \dot{\mubm}_t(\xbm_1 , \ybm) \label{eq:vf proof step 3}
\end{align}
Plugging Eq.~\eqref{eq:vf proof step 1} and Eq.~\eqref{eq:vf proof step 2} into Eq.~\eqref{eq:vf proof step 3} gives:
\begin{align*}
    \vecfield_t&(\zbm | \xbm_1 , \ybm) = \dot{\conditflow}_t \Big( \sigma^{-1}_t(\xbm_1 ) \big(\zbm - \mubm_t(\xbm_1 , \ybm) \big) + \Abm^\dagger\ybm  \mid \xbm_1, \ybm \Big) \\
        &= \frac{\dot{\sigma}_t(\xbm_1 )}{\sigma_t(\xbm_1 )} \projnull\big(\zbm - \mubm_t(\xbm_1 , \ybm) \big) + \dot{\sigma}_t(\xbm_1 ) \projnull\Abm^\dagger\ybm  + \dot{\mubm}_t(\xbm_1 , \ybm) \\
        &= \frac{\dot{\sigma}_t(\xbm_1)}{\sigma_t(\xbm_1 )} \big(\zbm - \mubm_t(\xbm_1 , \ybm) \big)  + \dot{\mubm}_t(\xbm_1 , \ybm).
\end{align*}
In the last line, we use $\zbm - \mubm_t \in \mathrm{null}(\Abm)$ and $\Abm^\dagger \ybm \not \in \mathrm{null}(\Abm)$.
\end{proof}

\section{Proof of Proposition~\ref{prop:main theorem}}
Lemmas~\ref{lemma: my Lipman thm 1}--\ref{lemma: my Lipman thm 3} verify that we can learn the instantaneous velocity of a flow restricted to $\Mcal_\ybm$.
Lemma~\ref{lemma: my Lipman thm 3} establishes that a vector field $\vecfield_t(\xbm|\xbm_1, \ybm)$ can be defined to generate the Gaussian path from $p_0(\xbm | \xbm_1, \ybm)$ to $p_1(\xbm | \xbm_1, \ybm)$.
Lemma~\ref{lemma: my Lipman thm 1} establishes that from $\vecfield_t(\xbm|\xbm_1, \ybm)$, a marginal vector field $\vecfield_t(\xbm|\ybm)$ can be defined to generate the marginal probability path $p_t(\xbm | \ybm)$.
Finally, Lemma~\ref{lemma: my Lipman thm 2} establishes that such a marginal instantaneous vector field
can be feasibly trained.

    We now seek the average conditional velocity: 
    using the same arguments from~\cite{geng2026mean}, 
     we rewrite the \textit{MeanFlow Identity};
    differentiating~\eqref{eq: average conditional velocity} gives~\eqref{eq: meanflow identity}, 
    where we use the Fundamental Theorem of Calculus.
    To see sufficiency of the MeanFlow Identity, we refer the reader to section B.3 of~\cite{geng.etal2026}.
    Rearranging terms gives the measurement conditional MeanFlow Identity:
    \begin{equation}
        \meanvecfield_{r, t}(\xbm_t | \ybm) = \vbm_t(\xbm_t | \ybm) - (t-r) \frac{\d}{\d t} \meanvecfield_{r, t}(\xbm_t | \ybm).
    \end{equation}
    Finally, we use the defined conditional flow in Eq.~\eqref{eq: the flow we use}, which gives the instantaneous vector field $\vecfield_t$,
    \begin{align*} 
        \vecfield_t(\xbm_t | \ybm) = \frac{\d}{\d t}\conditflow_t(\xbm | \ybm) &= -(1 - \sigma_{\text{min}}) \projnull \xbm + \projnull \xbm_1 \\ & = \projnull (\xbm_1 - (1 - \sigma_{\text{min}})  \xbm ) .
    \end{align*}
Since $\meanvecfield^\theta$ is a universal approximator, the global minimizer
of the $\ell_2$ loss satisfies~\eqref{eq: meanflow identity},
giving the result. \qed


\end{document}